\definecolor{Kred}{RGB}{175, 0, 0}
\definecolor{Kyellow}{RGB}{204,204,0}
\definecolor{Kblue}{RGB}{76, 161, 245}
\definecolor{Kgreen}{RGB}{76, 159, 119}
\newcommand{\Scope}{\mathrm{Scope}}
\newcommand{\VCtwo}{\{0,1\}^2}
\newcommand{\VCtwoDim}{\{0,1\}^2\text{-Dim}}
\newcommand{\MaxVCtwo}{\text{max-}\{0,1\}^2}
\newcommand{\MaxVCtwoDim}{\text{max-}\{0,1\}^2\text{-Dim}}
\newcommand{\duv}{d^{(0)}_{u,v}}
\newcommand{\VCDim}{\text{VC-Dim}}
\newcommand{\Mfrak}{\mathfrak{M}}
\newcommand{\fh}{\hat{f}}
\newcommand{\thetah}{\widehat{\theta}}
\newcommand{\Bh}{\widehat{B}}
\newcommand{\fs}{f^*}
\newcommand{\Ind}[1]{\ensuremath{\mathbbm{1} \hspace{-0.03in} \left[#1\right]}}                     
\newcommand{\Norm}[1]{\ensuremath{\lVert #1 \rVert}}                  
\newcommand{\InNorm}[1]{{\left\vert\kern-0.2ex\left\vert\kern-0.2ex\left\vert #1 
    \right\vert\kern-0.2ex\right\vert\kern-0.2ex\right\vert}}                    
\newcommand{\InNormII}[1]{{\left\vert\kern-0.2ex\left\vert\kern-0.2ex\left\vert #1 
    \right\vert\kern-0.2ex\right\vert\kern-0.2ex\right\vert}_2}                    
\newcommand{\InNormInfty}[1]{{\left\vert\kern-0.2ex\left\vert\kern-0.2ex\left\vert #1 
    \right\vert\kern-0.2ex\right\vert\kern-0.2ex\right\vert}_{\infty}}           
\newcommand{\iid}{i.i.d.~}                                                        
\newcommand{\defeq}{\overset{\mathrm{def}}{=}}                                   
\newtheorem{definition}{Definition}
\newtheorem{proposition}{Proposition}
\newtheorem{theorem}{Theorem}
\newcommand{\bE}{\mathbb{E}}
\newcommand{\bP}{\mathbb{P}}
\DeclareMathOperator*{\E}{\mathbb{E}}
\let\P\undefined
\DeclareMathOperator*{\P}{\mathbb{P}}
\newcommand{\tand}{\mathrm{and}}
\newcommand{\tor}{\mathrm{or}}
\DeclareMathOperator*{\argmax}{arg\,max}
\DeclareMathOperator*{\argmin}{arg\,min}
\newcommand{\figleft}{{\em (Left) }}
\newcommand{\figcenter}{{\em (Center) }}
\newcommand{\figright}{{\em (Right) }}
\def\1{\bm{1}}
\DeclareMathAlphabet{\mathsfit}{\encodingdefault}{\sfdefault}{m}{sl}
\SetMathAlphabet{\mathsfit}{bold}{\encodingdefault}{\sfdefault}{bx}{n}
\def\gA{{\mathcal{A}}}
\def\gD{{\mathcal{D}}}
\def\gF{{\mathcal{F}}}
\def\gG{{\mathcal{G}}}
\def\gH{{\mathcal{H}}}
\def\gO{{\mathcal{O}}}
\def\gP{{\mathcal{P}}}
\def\gR{{\mathcal{R}}}
\def\gV{{\mathcal{V}}}
\def\gX{{\mathcal{X}}}
\def\gY{{\mathcal{Y}}}
\def\gZ{{\mathcal{Z}}}
\def\sD{{\mathbb{D}}}
\def\sR{{\mathbb{R}}}
\author{%
		Kevin Bello\\
		Department of Computer Science\\
		Purdue Univeristy\\
		West Lafayette, IN 47906, USA \\
		\texttt{kbellome@purdue.edu} \\
	\And
		Asish Ghoshal\\
		Department of Computer Science\\
		Purdue Univeristy\\
		West Lafayette, IN 47906, USA \\
		\texttt{aghoshal@purdue.edu} \\
	\And
		Jean Honorio\\
		Department of Computer Science\\
		Purdue Univeristy\\
		West Lafayette, IN 47906, USA \\
		\texttt{jhonorio@purdue.edu} \\
}
\title{Minimax bounds for structured prediction}
\begin{document}

\maketitle

\begin{abstract}
Structured prediction can be considered as a generalization of many standard supervised learning tasks, and is usually thought as a simultaneous prediction of multiple labels. 
One standard approach is to maximize a score function on the space of labels, which decomposes as a sum of unary and pairwise potentials, each depending on one or two specific labels, respectively.
For this approach, several learning and inference algorithms have been proposed over the years, ranging from exact to approximate methods while balancing the computational complexity.
However, in contrast to binary and multiclass classification, results on the necessary number of samples for achieving learning is still limited, even for a specific family of predictors such as factor graphs.
In this work, we provide minimax bounds for a class of factor-graph inference models for structured prediction.
That is, we characterize the necessary sample complexity for any conceivable algorithm to achieve learning of factor-graph predictors.
\end{abstract}


\section{Introduction}
Structured prediction has been continuously used over the years in multiple domains such as computer vision, natural language processing, and computational biology. 
Key examples of structured prediction problems include image segmentation, dependency parsing, part-of-speech tagging, named entity recognition, machine translation and protein folding. 
In this setting, the input $x$ is some observation, e.g., social network, an image, a sentence. 
The output is a labeling $y$, e.g., an assignment of each individual of a social network to a cluster, or an assignment of each pixel in the image to foreground or background, or an acyclic graph as in dependency parsing.
A property common to these tasks is that, in each case, the natural loss function admits a decomposition along the output substructures.
Thus, a common approach to structured prediction is to exploit local features to infer the global structure.
For instance, one could include a feature that encourages two individuals of a social network to be assigned to different clusters whenever there is a strong disagreement in opinions about a particular subject.
Then, one can define a posterior distribution over the set of possible labelings conditioned on the input.

The output structure and corresponding loss function make these problems significantly different from the (unstructured) binary or multiclass classification problems extensively studied in learning theory.
Some classical algorithms for learning the parameters of the model include conditional random fields  \citep{lafferty2001conditional}, structured support vector machines \citep{Taskar03,tsochantaridis2005large,Altun03}, kernel-regression algorithm \citep{cortes2007general}, search-based structured prediction \citep{daume2009search}.
More recently, deep learning algorithms have been developed for specific tasks such as image annotation \citep{vinyals2015show}, part-of-speech-tagging \citep{jurafsky2014speech,vinyals2015grammar}, and machine translation \citep{zhang2008structured}.

However, in contrast to the several algorithms developed, there have been relatively few studies devoted to the theoretical understanding of structured prediction. 
From the few theoretical literature, the most studied aspect has been the generalization error bounds.
\citep{cortes2014ensemble,collins2004parameter,taskar2004max} provided learning guarantees that hold primarily for losses such as the Hamming loss and apply to specific factor graph models. 
\citep{McAllester07,honorio2016,bello2018learning,ghoshal2018learning} provide PAC-Bayesian guarantees for arbitrary losses through the analysis of randomized algorithms using count-based hypotheses.
Literature on lower bounding the sample complexity for structure prediction is scarcer even for specific hypothesis classes of losses.
Information-theoretic bounds have been studied in the context of binary graphical models \citep{santhanam2012information,tandon2014information} and Gaussian Markov random fields \citep{wang2010information}.
Nevertheless, there is still a lack of understanding in the context of more general structured prediction problems.

Our main contribution consists of characterizing the necessary sample complexity for learning factor graph models in the context of structured prediction.
Specifically, in Theorem \ref{thrm:minimax_lb}, we show that the finiteness of the $\VCtwo$-dimension (see Definition \ref{def:pair_dimension}) is necessary for learning.
We further show in Theorem \ref{thrm:vc2_vc} the connection of the $\VCtwo$-dimension to the VC-dimension \citep{vapnik2013nature}, which will allow us to compute the $\VCtwo$-dimension from the several known results on VC-dimension.

\section{Preliminaries}
\label{sec:preliminaries}
	Let $\gX$ denote the input space and $\gY$ the output space.
	In structured prediction, the output space usually consists of a large (e.g., exponential) set of discrete objects admitting some possibly overlapping structure.
	Among common structures in the literature, one finds set of sequences, graphs, images, parse trees, etc.
	Thus, we consider the output space $\gY$ to be decomposable into $l$ substructures: $\gY = \gY_1 \times \dots \times \gY_l$.
	Here, $\gY_i$ is the set of possible labels that can be assigned to substructure $i$.
	For example, in a webpage collective classification task \citep{taskar2002discriminative}, each $\gY_i$ is a webpage label, whereas $\gY$ is a joint label for an entire website.
	In this work we assume that $\gY_i \in \{0, 1\}$, that is, $|\gY_i| = 2$ for all $i$.
	In this case, the number of possible assignments to $\gY$ is exponential in the number of substructures $l$, i.e., $|\gY| = 2^l$.
	
	\paragraph{The Hamming loss.}
	\label{sec:hamming_loss}
		In order to measure the success of a prediction, we use the Hamming loss throughout this work.
		Specifically, for two outputs $y,y' \in \gY$, with $y=(y_1,\dots,y_l)$ and $y' = (y'_1,\dots,y'_l)$, the Hamming loss, $L_H$, is defined as $L_H(y,y') =  \sum_{i=1}^l \Ind{y_i \neq y'_i}$.
		The Hamming loss has been widely used in structured prediction, for instance, in image segmentation one may count the number of pixels that are incorrectly assigned as foreground/background; in graphs, one may count the number of different edges between the prediction and the true label.
	
	\paragraph{Factor graphs and scoring functions.} 
	\label{sec:score_and_factors}
		We adopt a common approach in structured prediction where predictions are based on a scoring function mapping $\gX \times \gY$ to $\gR$. 
		Let $\gF$ be a family of scoring functions. 
		For any $f \in \gF$, we denote by $f(x)$ the predictor defined by $f$: for any $x \in \gX$ , $f(x) = \argmax_{y\in\gY} f(x, y).$
		
		Furthermore, we assume that each function $f \in \gF$ can be decomposed as a sum, as is standard in structured prediction. 
		We consider the most general case for such decompositions through the notion of factor graphs, described also in \citep{cortes2016structured}.
		A factor graph $G$ is a bipartite graph, and is represented as a tuple $G = (V, \Phi, E)$, where $V$ is a set of variable nodes, $\Phi$ a set of factor nodes, and $E$ a set of undirected edges between a variable node and a factor node. 
		In our context, $V$ can be identified with the set of substructure indices, that is $V = \{1,\dots,l\}.$
		We further assume that $G$ is connected.
		Note that, in contrast to graphical models, we do not assume $f$ to be a probabilistic model but it would also be captured by this framework.
		
		For any factor node $\phi \in \Phi$, denote by $\Scope(\phi) \subseteq V$ the set of variable nodes connected to $\phi$ via an edge and define $\gY_\phi$ as the substructure set cross-product $\gY_\phi = \bigtimes_{i\in \Scope(\phi)} \gY_i$. 
		Then, $f$ decomposes as a sum of functions $f_\phi$ , each taking as argument an element of the input space $x \in \gX$ and an element of $\gY_\phi$, $y_\phi \in \gY_\phi$:
		\[
			f(x,y) = \sum_{\phi \in \Phi} f_\phi (x, y_\phi).
		\]
		Specifically, we focus on factor graphs with unary and pairwise factors, that is, each factor node $\phi \in \Phi$ is connected to  one or two nodes in $V$.
		We let $\phi_{uv}$ denote a pairwise factor node connected to $u,v \in \gV$, i.e., $\Scope(\phi_{uv}) = \{u,v\}$.
		Then, the score induced from $\phi_{uv}$ is given by $f_{\phi_{uv}}(x,y_{\phi_{uv}}) = f_{\phi_{uv}}(x,y_u,y_v)$.
		Note that in this case $\phi_{uv}$ and $\phi_{vu}$ represent the same factor node and induce the same score.
		Similarly, for unary factor nodes, we let $\phi_{u}$ denote a factor node connected to $u \in \gV$ with score given by $f_{\phi_{u}}(x,y_{\phi_{u}}) = f_{\phi_{u}}(x,y_u)$.
		We further use $\gF(G)$ to denote functions that are decomposable with respect to the graph $G$.
		Note also that while all $f \in \gF(G)$ decompose with respect to same graph $G$, the score functions $f_\phi$ and $f'_\phi$ are allowed to be different for any $\phi \in \Phi$, $f,f' \in \gF(G)$. 
		Figure \ref{fig:graph_examples} shows different examples of factor graphs with unary and pairwise factors.
		
		\begin{figure}[ht]
			\begin{center}
				\begin{tikzpicture}[scale=1.0]
				\tikzset{>=latex}
				\tikzstyle{vertex}=[circle, fill=Kblue, draw, inner sep=0pt, minimum size=12pt]
				\tikzstyle{factor}=[rectangle, fill=Kgreen, draw, inner sep=0pt, minimum size=3pt]
					\node[vertex][label=center:\small$y_1$](y1) at (1,2) {};
					\node[vertex][label=center:\small$y_2$](y2) at (2,2) {};
					\node[vertex][label=center:\small$y_3$](y3) at (3,2) {};
					\node[factor][](f1) at (1,2.5) {};
					\node[factor][](f2) at (2,2.5) {};
					\node[factor][](f3) at (3,2.5) {};
					\node[factor][](f12) at (1.5,2) {};
					\node[factor][](f23) at (2.5,2) {};
				\tikzset{EdgeStyle/.style={-,color=Kred}}
					\Edge(y1)(f1)
					\Edge(y2)(f2)
					\Edge(y3)(f3)
					\Edge(f12)(y1)
					\Edge(f12)(y2)
					\Edge(f23)(y2)
					\Edge(f23)(y3)
				\end{tikzpicture}
				\hspace{0.25in}
				\begin{tikzpicture}[scale=1.0]
				\tikzset{>=latex}
				\tikzstyle{vertex}=[circle, fill=Kblue, draw, inner sep=0pt, minimum size=12pt]
				\tikzstyle{factor}=[rectangle, fill=Kgreen, draw, inner sep=0pt, minimum size=3pt]
					\node[vertex][label=center:\small$y_1$](y1) at (1,3) {};
					\node[vertex][label=center:\small$y_2$](y2) at (2,3) {};
					\node[vertex][label=center:\small$y_3$](y3) at (3,3) {};
					\node[vertex][label=center:\small$y_4$](y4) at (1,2) {};
					\node[vertex][label=center:\small$y_5$](y5) at (2,2) {};
					\node[vertex][label=center:\small$y_6$](y6) at (3,2) {};
					\node[factor][](f12) at (1.5,3) {};
					\node[factor][](f23) at (2.5,3) {};
					\node[factor][](f45) at (1.5,2) {};
					\node[factor][](f56) at (2.5,2) {};
					\node[factor][](f14) at (1,2.5) {};
					\node[factor][](f25) at (2,2.5) {};
					\node[factor][](f36) at (3,2.5) {};
				\tikzset{EdgeStyle/.style={-,color=Kred}}
					\Edge(f12)(y1); \Edge(f12)(y2);
					\Edge(f23)(y2); \Edge(f23)(y3);
					\Edge(f45)(y4); \Edge(f45)(y5);
					\Edge(f56)(y5); \Edge(f56)(y6);
					\Edge(f14)(y1); \Edge(f14)(y4);
					\Edge(f25)(y2); \Edge(f25)(y5);
					\Edge(f36)(y3); \Edge(f36)(y6);
				\end{tikzpicture}
				\hspace{0.25in}
				\begin{tikzpicture}[scale=1.0]
				\tikzset{>=latex}
				\tikzstyle{vertex}=[circle, fill=Kblue, draw, inner sep=0pt, minimum size=12pt]
				\tikzstyle{factor}=[rectangle, fill=Kgreen, draw, inner sep=0pt, minimum size=3pt]
					\node[vertex][label=center:\small$y_1$](y1) at (1,1) {};
					\node[vertex][label=center:\small$y_2$](y2) at (2,2) {};
					\node[vertex][label=center:\small$y_3$](y3) at (3,1) {};
					\node[vertex][label=center:\small$y_4$](y4) at (4,2) {};
					\node[vertex][label=center:\small$y_5$](y5) at (5,1) {};
					
					\node[factor][](f1) at (1.5,1) {};
					\node[factor][](f4) at (4.5,2) {};
					\node[factor][](f12) at (1.5,1.5) {};
					\node[factor][](f23) at (2.5,1.5) {};
					\node[factor][](f24) at (3,2) {};
					\node[factor][](f34) at (3.5,1.5) {};
					\node[factor][](f45) at (4.5,1.5) {};
				\tikzset{EdgeStyle/.style={-,color=Kred}}
					\Edge(f12)(y1); \Edge(f12)(y2);
					\Edge(f23)(y2); \Edge(f23)(y3);
					\Edge(f24)(y4); \Edge(f24)(y2);
					\Edge(f34)(y4); \Edge(f34)(y3);
					\Edge(f45)(y4); \Edge(f45)(y5);
					\Edge(f1)(y1); 
					\Edge(f4)(y4);
				\end{tikzpicture}
			\end{center}
			\caption{Examples of factor graphs with unary and pairwise factors.
					 \figleft Tree-structured factor graph.
					 \figcenter Grid-structured factor graph.
					 \figright Arbitrary factor graph with decomposition: $f(x,y) = f_{\phi_1}(x,y_1) + f_{\phi_4}(x,y_4) + f_{\phi_{12}}(x,y_1,y_2)+ f_{\phi_{23}}(x,y_2,y_3) + f_{\phi_{24}}(x,y_2,y_4) + f_{\phi_{34}}(x,y_3,y_4) + f_{\phi_{45}}(x,y_4,y_5)$
			}
			\label{fig:graph_examples}
		\end{figure}
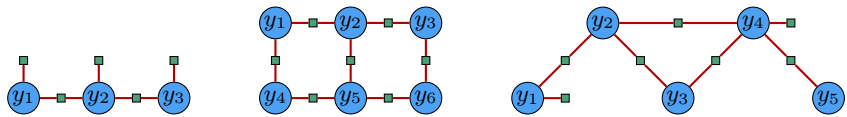

	\paragraph{Learning.}
	\label{sec:learning}
		We receive a training set $S = ((x_1,y_1),\dots,(x_m,y_m))$ of $m$ \iid samples drawn according to some distribution $P$ over $\gX \times \gY$.
		We denote by $R_P(f)$ the \textit{expected Hamming loss} and by $R_S(f)$ the \textit{empirical Hamming loss} of $f$:
		\begin{align}
		\label{eq:general_error}
			R_P (f) = \E_{(x,y) \sim P} [ L_H( f(x), y ) ] \quad \tand \quad R_S(f) = \frac{1}{m} \sum_{(x,y) \in S} L_H(f(x),y).
		\end{align}
		Our learning scenario consists of using the sample $S$ to select a hypothesis $f \in \gF(G)$ with small expected Hamming loss $R_P(f)$.

		Next, we introduce the definition of \textit{Bayes-Hamming loss}, which in words is the minimum attainable expected Hamming loss by any predictor.
		\begin{definition}[Bayes-Hamming loss]
		\label{def:Bayes_error}
			For any given distribution $P$ over $\gX \times \gY$, the Bayes-Hamming loss is defined as the minimum achievable expected Hamming loss among all possible predictors $f:\gX \to \gY$. 
			That is, $R^{*}=\min _{f} R_P(f).$
		\end{definition}
		Then the \textit{Bayes-Hamming predictor}, $f^*$, is defined as the function that achieves the Bayes-Hamming loss, that is, $R(f^*) = R^*$.
	
		The following proposition shows how the Bayes-Hamming predictor makes its decision with respect to the Hamming loss.
		\begin{proposition}
		\label{prop:bayes_classifier}
			For any given distribution $P$ over $\gX \times \gY$,  the Bayes-Hamming predictor $f^*$ is:
			$(f^{*}(x))_{i} = \Ind{\eta_i(x) \geq 1 / 2}.$
			where $\eta_i(x)$ is the marginal probability $\P[y_i=1|x]$, for each substructure $y_i$.
		\end{proposition}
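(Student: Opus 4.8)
The plan is to exploit the additive, coordinate-wise structure of the Hamming loss to reduce the problem to $l$ independent pointwise minimizations, one per substructure. First I would write, using the tower property and linearity of expectation over the finite sum,
\[
R_P(f) = \E_{(x,y)\sim P}[L_H(f(x),y)] = \E_x\!\left[\sum_{i=1}^l \P\big(y_i \neq (f(x))_i \,\big|\, x\big)\right] = \sum_{i=1}^l \E_x\!\left[\P\big(y_i \neq (f(x))_i \,\big|\, x\big)\right].
\]
Since the Bayes-Hamming loss is an infimum over \emph{all} predictors $f\colon\gX\to\gY$, and each such $f$ is free to choose the value $(f(x))_i\in\{0,1\}$ independently for every coordinate $i$ and every point $x$, one can push the minimization inside the sum and inside the expectation over $x$: for every $f$,
\[
R_P(f) \;\geq\; \sum_{i=1}^l \E_x\!\left[\min_{b\in\{0,1\}} \P\big(y_i \neq b \,\big|\, x\big)\right].
\]

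Next I would evaluate the inner minimization. For fixed $x$ and $i$, choosing $b=1$ incurs conditional error $\P(y_i=0\mid x)=1-\eta_i(x)$, while choosing $b=0$ incurs $\P(y_i=1\mid x)=\eta_i(x)$. Hence $b=1$ is (weakly) optimal precisely when $1-\eta_i(x)\le \eta_i(x)$, i.e.\ $\eta_i(x)\ge 1/2$, which identifies the minimizer as $b^\star(x,i)=\Ind{\eta_i(x)\ge 1/2}$ with minimal value $\min(\eta_i(x),1-\eta_i(x))$. Defining $f^*$ by $(f^{*}(x))_i = \Ind{\eta_i(x)\ge 1/2}$, I would then check that this attains equality in the displayed lower bound term by term, so that $R_P(f^*)=\sum_{i=1}^l \E_x[\min(\eta_i(x),1-\eta_i(x))]=R^{*}$, establishing that $f^*$ is a Bayes-Hamming predictor.

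The routine calculations are trivial; the only genuine points requiring care are measurability-type details — namely that $\eta_i(\cdot)$ is a (version of the) conditional probability and hence measurable, so that the pointwise-defined $f^*$ is an admissible predictor, and that the interchange of the (finite) sum with the expectation and the pushing of the coordinatewise minimum inside $\E_x$ are legitimate. I expect establishing the matching lower bound over \emph{all} $f$ (rather than just over a restricted hypothesis class) to be the conceptual crux, but it follows immediately once one observes that the constraint set $\gY=\{0,1\}^l$ is a product and the Hamming loss separates; there is no coupling between coordinates, so no loss is incurred by optimizing each coordinate in isolation.
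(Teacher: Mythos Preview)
Your proposal is correct and follows essentially the same approach as the paper: both exploit the coordinate-wise separability of the Hamming loss, apply linearity of expectation and the tower property to reduce to $\sum_i \E_x[\eta_i(x)(1-(f(x))_i)+(1-\eta_i(x))(f(x))_i]$, and then minimize each term pointwise over $b\in\{0,1\}$. Your write-up is slightly more explicit in separating the lower bound from attainment and in noting the measurability caveat, but the argument is the same.
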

		(See Appendix \ref{app:detailedproofs} for detailed proofs.)
		
		We emphasize that the above definition considers the Hamming loss, $L_H$, as defined at the beginning of Section \ref{sec:preliminaries}.
		For other types of loss functions, the Bayes predictor can have different optimal decisions.

	\subsection{Minimax risk framework}
	\label{sec:minimax_framework}
		The standard minimax risk consists of a family of distributions $\gP$ over a sample space $\gZ$, and a function $\theta : \gP \to \Theta$ defined on $\gP$, that is, a mapping $P \mapsto \theta(P)$.
		We aim to estimate the parameter $\theta(P)$ based on a sequence of \iid observations $(z_i)_{i=1}^m$ drawn from the (unknown) distribution $P$.
		To evaluate the quality of an estimator $\theta$, we let $\rho : \Theta \times \Theta \to \sR_+$ denote a semi-metric on the space $\Theta$, which we use to measure the error of an estimator $\thetah$ with respect to the parameter $\theta$.
		For a distribution $P \in \gP$ and for a given estimator $\thetah : \gZ^m \to \Theta$, we assess the quality of the estimate $\thetah(z_1, \ldots, z_m)$ in terms of the (expected) risk:
		\[
			\bE_P [ \rho(\thetah(z_1, \ldots, z_m), \theta(P)) ],
		\]
		where $\bE_P[ \cdot ]$ denotes the expectation with respect to $(z_1,\dots,z_m) \sim P^m$. 
		A common approach, first suggested by \citep{wald1939contributions}, for choosing an estimator $\thetah$ is to select the one that minimizes the maximum risk, that is,
		\[
			\sup_{P \in \gP} \bE_P [ \rho(\thetah(z_1, \ldots, z_m), \theta(P)) ].
		\]
		An optimal estimator for this metric then gives the minimax risk, which is defined as:
		\begin{align*}
			\Mfrak_{m}(\theta(\gP), \rho) := \inf_{\thetah} \sup_{P \in \gP} \bE_{P} \left[ \rho( \thetah (z_1, \ldots, z_m), \theta(P) ) \right],
		\end{align*}
		where we take the supremum (worst-case) over distributions $P \in \gP$, and the infimum is taken over all estimators $\thetah$.
		Here the notation $\theta(\gP)$ indicates that we consider distributions in $\gP$ and parameters $\theta(P)$ for $P \in \gP$.

	\subsection{Minimax risk in structured prediction}
	\label{sec:minimax_sp}
		We now apply the framework above to our context and study a specialized notion of risk appropriate for prediction problems.
		In this setting, we aim to estimate a function $f \in \gF(G)$ by using samples from a distribution $P$.
		For any sample $(x,y) \sim P$, we will measure the quality of our estimation, $f$, by comparing its output $f(x)$ to the structure $y$ drawn from $P$ through the Hamming loss.
		By taking expectation, we obtain the expected risk or expected Hamming loss, $R_P(f)$ defined in eq.\eqref{eq:general_error}.
		We then compare this risk to the best possible Hamming loss, i.e., the Bayes-Hamming loss.
		That is, we assume that at least one function $f \in \gF(G)$ achieves the Bayes-Hamming loss. 
		Thus, we arrive to the following \textit{minimax excess risk}:
		\begin{align}
		\label{eq:minimax}
			\Mfrak_m(\gP) = \inf_{\gA} \sup_{P \in \gP} \E_{S \sim P^m} [R_P(\gA(S)) - R_P(\fs)],
		\end{align}
		where $\fs = \argmin_{f\in \gF(G)} R_P(f)$, and $\gA: (\gX \times \gY)^m \to \gF(G)$ is any algorithm that returns a predictor given $m$ training samples from $P$.
		Moreover, $\gP$ defines a family of distributions over $\gX \times \gY$.
		Intuitively speaking, for a fixed distribution $P \in \gP$, the quantity $\Mfrak_m(\gP)$ represents the minimum expected excess loss achievable by any algorithm with respect to the factor graph $G$.
		Then $\Mfrak_m(\gP)$ looks into the distribution that attains the worst expected excess loss.
		

\section{Information-theoretic lower bound for structured prediction}
	We are interested on finding a lower bound to the minimax risk \eqref{eq:minimax} presented in Section \ref{sec:minimax_sp}.
	By doing this, we characterize the necessary number of samples to have any hope in achieving learning.
	
	Before presenting our main result, we introduce a new type of dimension that will show up in our lower bound and will help to characterize learnability.
	Note that it is known that different notions of dimension of function classes help to characterize learnability in certain prediction problems.
	For example, for binary classification, the finiteness of the VC dimension \citep{vapnik2013nature} is necessary for learning \citep{massart2006risk}.
	For multiclass classification, it was shown that the finiteness of the Natarajan dimension is necessary for learning \citep{daniely2015multiclass}.
	General notion of dimensions for multiclass classification has also been study in \citep{bendavid1995characterizations}.	
	
	For a given function class $\gG \subseteq \{g \ | \ g: \gX \to \{0,1\}^2 \}$, and dataset $S$ of $m$ samples, we use the following shorthand notation:
	\(
		\gG(S) = \{ (g(x_1), \ldots, g(x_m)) \in \{0,1\}^{m\times 2} \ | \ g \in \gG \}.
	\)
	That is, $\gG(S)$ contains all the matrices in $\{0,1\}^{m\times 2}$ that can be produced by applying all functions in $\gG$ to the dataset $S$.
	Next we define the standard notion of shattering.
	\begin{definition}[$\VCtwo$-shattering]
	\label{def:pair_shattering}
		A function class, $\gG$, $\VCtwo$-shatters a finite set $S$ of $m$ samples if $\gG(S)$ produces all possible binary matrices in $\{0,1\}^{m\times 2}$.
		That is, $|\gG(S)| = 2^{2m}$.
	\end{definition}
	
	\begin{definition}[$\VCtwo$-dimension]
	\label{def:pair_dimension}
		The $\VCtwo$-dimension of a function class $\gG$, denoted $\VCtwoDim(\gG)$, is the maximal size of a set $S$ that can be shattered by $\gG$. 
		If $\gG$ can shatter sets of arbitrarily large size we say that $\gG$ has infinite $\VCtwo$-dimension.
	\end{definition}
	The above dimension applies to functions with output  in $\{0,1\}^2$. 
	We will create functions with output in $\{0,1\}^2$ as follows.
	Let $ f^{(0)}_{u,v}(x,y_u,y_v) = f\left( x, \left( 0,\ldots,0, y_u, 0,\ldots,0, y_v, 0, \ldots,0 \right) \right)$ denote the function $f(x,y)$ with $y_i=0$ for all $i \in \{1,\ldots,l\}\setminus\{u,v\}$.
	Then, let $f^{(0)}_{u,v}(x) = \argmax_{y_u,y_v} f^{(0)}_{u,v}(x,y_u,y_v)$, that is, the output of $f^{(0)}_{u,v}(x)$ is in $\{0,1\}^2$.
	The following dimension applies to function classes based on factor graphs.
	\begin{definition}[$\MaxVCtwo$-dimension]
	\label{def:max_pair_dimension}
		For a given factor graph $G=(V,\Phi,E)$, the $\MaxVCtwo$-dimension of a function class $\gF(G)$, denoted as $\MaxVCtwoDim(\gF(G))$, is defined as:
		\[
			\MaxVCtwoDim(\gF(G))  =  \max_{(u,v) \in T} \ \VCtwoDim(\gF^{(0)}_{u,v}),
		\]
		where $T = \{(u,v) \in \Scope(\phi) \ |  \ \forall \phi \in \Phi \}$, and $\gF^{(0)}_{u,v} = \{ f^{(0)}_{u,v} \ |\ f \in \gF(G), (u,v) \in T  \}$.
	\end{definition}
	\begin{theorem}
	\label{thrm:minimax_lb}
		Let $G=(V,\Phi,E)$ be a factor graph with pairwise and unary factors, let $\gF(G)$ denote a class of functions $f: \gX \to \{0,1\}^l$, where each $f \in \gF(G)$ decomposes according to $G$, and let $d = \MaxVCtwoDim(\gF(G)) \geq 2$.
		Then, we have that for any $\gamma \in [0,\nicefrac{1}{3}]$ and any $m \geq d$:
		\[
			\Mfrak_m(\gP) \geq \frac{1}{81} \min \left( \frac{d -1}{\gamma m}, \sqrt{\frac{d -1}{m}} \right).
		\]
	\end{theorem}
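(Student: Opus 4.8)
The plan is to lower-bound $\Mfrak_m(\gP)$ by exhibiting, for each $\gamma\in[0,\tfrac13]$, a finite sub-family of distributions in $\gP$ that $m$ samples cannot reliably distinguish, and then applying Assouad's lemma. The entry point is Proposition~\ref{prop:bayes_classifier}: since the Hamming loss decomposes coordinatewise and the Bayes-Hamming predictor thresholds each marginal at $\tfrac12$, for any $P$ and any predictor $f$ one has
\[
	R_P(f)-R_P(\fs)=\sum_{i=1}^{l}\E_{x}\bigl[\,|2\eta_i(x)-1|\cdot\Ind{(f(x))_i\neq(\fs(x))_i}\,\bigr].
\]
So I will design a distribution in which only \emph{one} coordinate carries signal, at a controlled margin, so that this sum reduces to a single term that is exactly what Assouad's lemma controls, and so that the hard instance is realized inside $\gF(G)$ via the $\MaxVCtwo$-dimension.

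\textbf{Construction.} Using Definition~\ref{def:max_pair_dimension}, fix a pair $(u,v)\in T$ with $\VCtwoDim(\gF^{(0)}_{u,v})=d$ and a set $\{x_1,\dots,x_d\}$ that is $\VCtwo$-shattered by $\gF^{(0)}_{u,v}$ (Definition~\ref{def:pair_shattering}); projecting onto the $u$-coordinate, every pattern in $\{0,1\}^{d}$ on $(x_1,\dots,x_d)$ equals $\bigl(f^{(0)}_{u,v}(\cdot)\bigr)_u$ for some $f\in\gF(G)$. Put $\beta\defeq\max\bigl(\gamma,\,\tfrac13\sqrt{(d-1)/m}\bigr)$, which satisfies $\beta\le\tfrac13$ because $\gamma\le\tfrac13$ and $m\ge d$, and $\alpha\defeq\tfrac{1}{9\beta^2 m}$, which satisfies $\alpha\le\tfrac1{d-1}$ because $\beta\ge\tfrac13\sqrt{(d-1)/m}$ forces $9\beta^2 m\ge d-1$. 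For each sign vector $\tau\in\{-1,+1\}^{d-1}$ (nonempty as $d\ge2$), let $P_\tau$ place mass $\alpha$ on each of $x_1,\dots,x_{d-1}$ and the remaining mass on the anchor $x_d$; given $x=x_j$ with $j\le d-1$ let $y_u\sim\mathrm{Ber}\bigl(\tfrac{1+\tau_j\beta}{2}\bigr)$ and $y_i=0$ for all $i\neq u$, and given $x=x_d$ let $y\equiv0$. By Proposition~\ref{prop:bayes_classifier} the Bayes-Hamming predictor of $P_\tau$ predicts $0$ on every coordinate $\neq u$ and predicts $\Ind{\tau_j=+1}$ for $y_u$ at $x_j$; by the shattering property this predictor is realized in $\gF(G)$, so $\fs$ attains the Bayes risk and $P_\tau\in\gP$. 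With $|2\eta_u(x_j)-1|=\beta$ the identity above collapses to
\[
	R_{P_\tau}(\gA(S))-R_{P_\tau}(\fs)=\alpha\beta\sum_{j=1}^{d-1}\Ind{(\gA(S)(x_j))_u\neq\Ind{\tau_j=+1}}.
\]

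\textbf{Assouad step.} Averaging over $\tau$ uniform on $\{-1,+1\}^{d-1}$ and applying Assouad's lemma,
\[
	\Mfrak_m(\gP)\ \ge\ \inf_{\gA}\sup_{\tau}\E_{S\sim P_\tau^{m}}\bigl[R_{P_\tau}(\gA(S))-R_{P_\tau}(\fs)\bigr]\ \ge\ \alpha\beta\cdot\frac{d-1}{2}\Bigl(1-\max_{j}\,d_{\mathrm{TV}}\bigl(P_{\tau^{j,+}}^{m},P_{\tau^{j,-}}^{m}\bigr)\Bigr),
\]
where $\tau^{j,\pm}$ differ from $\tau$ only in coordinate $j$. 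Since the corresponding single-draw laws agree except on the event $\{x=x_j\}$ of probability $\alpha$, tensorization of the KL divergence and Pinsker's inequality give
\[
	d_{\mathrm{TV}}\bigl(P_{\tau^{j,+}}^{m},P_{\tau^{j,-}}^{m}\bigr)\ \le\ \sqrt{\tfrac12\,m\,\alpha\,\KL\bigl(\mathrm{Ber}(\tfrac{1+\beta}{2})\,\|\,\mathrm{Ber}(\tfrac{1-\beta}{2})\bigr)}\ \le\ \sqrt{\tfrac12\,m\,\alpha\cdot\tfrac92\beta^{2}}\ =\ \tfrac32\beta\sqrt{m\alpha}\ =\ \tfrac12,
\]
using $\KL(\mathrm{Ber}(p)\|\mathrm{Ber}(q))\le(p-q)^2/(q(1-q))$ with $q(1-q)\ge\tfrac29$ for $\beta\le\tfrac13$, and the choice $\alpha=\tfrac1{9\beta^2 m}$. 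Hence $\Mfrak_m(\gP)\ge\alpha\beta(d-1)/4=\tfrac{d-1}{36\,\beta m}$, and substituting $\beta=\max\bigl(\gamma,\tfrac13\sqrt{(d-1)/m}\bigr)$ (note $x\mapsto\tfrac{d-1}{36xm}$ is decreasing) yields $\tfrac{d-1}{36\beta m}=\min\bigl(\tfrac{d-1}{36\gamma m},\ \tfrac1{12}\sqrt{(d-1)/m}\bigr)\ge\tfrac1{81}\min\bigl(\tfrac{d-1}{\gamma m},\sqrt{(d-1)/m}\bigr)$, which is the claim; the slacker constant $\tfrac1{81}$ leaves room for a more conservative accounting of the Assouad and Pinsker constants.

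\textbf{Main obstacle.} The genuinely delicate point will be the realizability claim in the construction: I need each $P_\tau$ to belong to $\gP$, i.e.\ its Bayes-Hamming predictor must be attained inside $\gF(G)$, yet the $\VCtwo$-shattering only controls the \emph{restricted} functions $f^{(0)}_{u,v}$ (all coordinates other than $u,v$ clamped to $0$), whereas $\fs$ is defined through the unconstrained $\argmax_{y\in\gY}f(x,y)$. Closing this gap --- by choosing the shattering witnesses and the anchor $x_d$ so that the full $\argmax$ of the selected $f$ indeed predicts $0$ on the clamped coordinates at $x_1,\dots,x_{d-1}$, or else showing the collateral loss incurred there is small enough to be absorbed into the constant --- is where the factor-graph structure (unary factors, connectedness) must be exploited. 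Everything else is a textbook instantiation of Assouad's lemma with a Bernoulli-bias prior.
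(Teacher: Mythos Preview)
Your proposal follows the paper's blueprint: restrict to a hypercube family of distributions supported on a $\VCtwo$-shattered set with low-margin Bernoulli label noise, rewrite the excess Hamming risk as a margin-weighted Hamming distance to the Bayes predictor (this is exactly the paper's chain \eqref{subeq:gamma}--\eqref{subeq:l1}), and apply Assouad with a tuned mass/margin pair. The paper differs only cosmetically: it indexes the cube by \emph{both} coordinates $(u,v)$, so the hypercube is $\{0,1\}^{(d-1)\times 2}$ rather than your $\{-1,+1\}^{d-1}$, and it bounds the one-flip distance via Hellinger rather than KL/Pinsker; neither choice affects the rate, only the constants, and your single parameter $\beta=\max(\gamma,\tfrac13\sqrt{(d-1)/m})$ is a tidier way to merge the two regimes than the paper's case split into $\gamma\gtrless\sqrt{(d-1)/m}$. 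Using both coordinates buys the paper a factor of $2$ in the Assouad count (a $2(d-1)$-bit cube instead of $d-1$ bits), which is why its constant lands at $1/81$ while yours lands at $1/36$ and $1/12$ before you relax down.

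On the obstacle you flag: you are correct that $\VCtwo$-shattering of $\gF^{(0)}_{u,v}$ constrains only the clamped argmax $\argmax_{y_u,y_v}f\bigl(x,(0,\dots,y_u,\dots,y_v,\dots,0)\bigr)$, not the full predictor $\argmax_{y\in\{0,1\}^l}f(x,y)$, so verifying that each $P_\tau$ lies in $\gP$ (i.e.\ that its Bayes-Hamming predictor is realized by some $f\in\gF(G)$) is a genuine additional step. The paper does not supply that step either: it writes down the Bayes predictor $f^*_{B,u,v}$ in \eqref{eq:fs_B} and proceeds directly to Assouad without checking that some $f\in\gF(G)$ attains it through the unconstrained argmax. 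So this is a lacuna shared with the paper rather than a defect particular to your route; your sketch is at the same level of rigor on this point, and your identification of it as the crux is accurate.
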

	\begin{proof}
	The proof is motivated by the work of \cite{massart2006risk} for binary classifiers.
	As a first step it is clear that one can lower bound eq.\eqref{eq:minimax} by defining the maximum over a subset of $\gP$.
	That is, we create a collection of family of distributions $\sD_\gamma$, where $|\sD| = |\Phi|$.
	Each family distribution $\gD_{\gamma,u,v} \in \sD_\gamma$ is further indexed by $(u,v) \in T = \{(u,v) \in \Scope(\phi) \ |  \ \forall \phi \in \Phi  \}$.
	Then we have,
	\begin{align*}
	\Mfrak_m(\gP) \geq \max_{(u,v) \in T} \Mfrak_m(\gD_{\gamma,u,v}).
	\end{align*}
	Our approach consists of first defining the families of distributions $\gD_{\gamma,u,v} \subset \gP$ such that its elements can be naturally indexed by the vertices of a binary hypercube.
	We will then relate the expected excess risk problem to an estimation of binary strings in order to apply Assouad's lemma.
	
	\paragraph{Construction of $\gD_{\gamma,u,v}$.} 
	Consider  a fixed $(u,v) \in T$.
	We first focus on constructing a family of distributions, $\gD_{\gamma,u,v}$, parameterized by $\gamma > 0$.
	Each distribution $D_{\gamma,u,v,B} \in \gD_{\gamma,u,v}$ is further indexed by a binary matrix $B \in \{0,1\}^{(\duv -1)\times 2}$, where $\duv$ is the $\VCtwo$-dimension of $\gF^{(0)}_{u,v}$.
	To construct these distributions, we will first pick the marginal distribution $D^{(x)}_{\gamma,u,v,B}$ of the feature $x$, and then specify the conditional distributions $D^{(y|x)}_{\gamma,u,v,B}$ of $y$ given $x$, for each $B \in \{0,1\}^{(\duv - 1)\times 2}$.
	
	We construct $D_{\gamma,u,v,B}^{(x)}$ as follows. 
	Since $\gF^{(0)}_{u,v}$ is a class with $\VCtwo$-dimension $\duv$, there exists a set of points $\{x_1,\ldots,x_{\duv}\} \in \gX$ that are shattered by $\gF^{(0)}_{u,v}$, that is, for any binary matrix $B \in \{0,1\}^{\duv \times 2}$ there exists at least one function $f^{(0)}_{u,v} \in \gF^{(0)}_{u,v}$ such that $f^{(0)}_{u,v}(x_i) = B_{i*}$, for all $i \in \{1,\ldots,\duv\}$.
	We now define the marginal distribution $D_{\gamma,u,v,B}^{(x)}$ such that its support is the shattered set $\{x_1,\ldots,x_{\duv}\}$, i.e., $\P_{\gamma,u,v,B}^{(x)}[\{x_1,\ldots,x_{\duv} \}] = 1$.
	For a given parameter $p \in [0, \nicefrac{1}{(\duv-1)} ]$, whose value is set later, we have:
	\begin{align*}
	\bP_{\gamma,u,v,B}^{(x)} [x_i] = \left\{ \begin{array}{ll} 		  
			p, & \text{if } i \in \{1,\ldots, \duv -1\} \\ 
			1 - (\duv-1) p, & \text{otherwise.} 
	\end{array} \right.
	\end{align*}
	Next, for a fixed $B \in \{0,1\}^{(\duv-1)\times 2}$, the conditional distribution of $y$ given $x$, $D_{\gamma,u,v,B}^{(y|x)}$, is defined as:
	\begin{align*}
	\bP_{\gamma,u,v,B}^{(y|x)}[y|x]  = \left\{\begin{array}{ll} 
			\frac{1 - 3 \gamma}{4},		& \text{if } x=x_i,\ \tand \ y_u = 1 - B_{i1}, \ \tand \ y_v = 1 - B_{i2}, \\
										& \tand \ y_k=0 \text{ for } k \in V\setminus\{u,v\}, \ \tand \ i \in \{1,\ldots,\duv-1\} \\ 
			\frac{1 +   \gamma}{4},		& \text{if } x=x_i, \ \tand \ y_k=0 \text{ for } k \in V\setminus\{u,v\}, \ \tand \ i \in \{1,\ldots,\duv-1\}\\ 
			0,							& \text{otherwise,}
	\end{array}\right.
	\end{align*}
	here we implicitly assume that $\gamma \in (0,1/3]$ in order to obtain a valid distribution.
	The above definition produces the following marginal probabilities:
	\begin{align}
	\label{eq:eta_marginal}
	\eta_{j}^{(\gamma,u,v,B)} (x) \equiv \bP^{(y_j | x)}_{\gamma,u,v,B} [y_j=1 | x] = \left\{\begin{array}{ll}
			{\frac{1-\gamma}{2},} & \text{if } x=x_{i} \text { for some } i \in\{1, \ldots, \duv-1\}, \tand\\
			 					  &  ( (j=u \ \tand \ B_{i1}=0)\ \tor \ (j=v \ \tand \ B_{i2}=0)) \\ 
			{\frac{1+\gamma}{2},} & \text{if } x=x_{i} \text { for some } i \in\{1, \ldots, \duv-1\}, \tand\\
								  &  ( (j=u \ \tand \ B_{i1}=1)\ \tor \ (j=v \ \tand \ B_{i2}=1)) \\ 
			{0,} & {\text{otherwise,}}
	\end{array}\right.
	\end{align}
	where we note that for each $j \in V$ and any $x$ we have that $|2\eta_{j}^{(\gamma,u,v,B)} (x) - 1| \geq \gamma$.
	Given the above marginals, the corresponding Bayes-Hamming predictor for substructure $y_j$ for a given input $x$ (see Proposition \ref{prop:bayes_classifier}), which we denote by $(f^*_{B,u,v}(x))_j$, is given by:
	\begin{align}
	\label{eq:fs_B}
	(f_{B,u,v}^{*}(x))_j = \left\{\begin{array}{ll}
		0, 	& \text { if } x=x_{i} \text { for some } i \in\{1, \ldots, \duv-1\}, \\
			& \tand \ ( (j=u \ \tand \ B_{i1}=0)\ \tor \ (j=v \ \tand \ B_{i2}=0)) \\ 
		1,  & \text { if } x=x_{i} \text { for some } i \in\{1, \ldots, \duv-1\}, \\
			& \tand \ ( (j=u \ \tand \ B_{i1}=1)\ \tor \ (j=v \ \tand \ B_{i2}=1)) \\ 
		0,  & \text { otherwise.}
	 \end{array}\right.
	\end{align}
	That is, we have that the output of the Bayes-Hamming predictor on each $x_i$ for $i \in \{1\ldots \duv-1\}$, for each substructure $y_j$ for $j \in \{u,v\}$, is equal to the bit value $B_{i1}$ or $B_{i2}$, and zero otherwise.

	\paragraph{Reduction to estimation of binary strings.}
	For any distribution $D_{\gamma,u,v,B} \in \gD_{\gamma,u,v}$, we can further express the expected excess risk in eq.\eqref{eq:minimax} as follows:
	\begin{flalign}
	&R_{B,u,v}(\gA(S)) - R_{B,u,v}(\fs_{B,u,v})
	=  \E_{(x,y) \sim D_{\gamma,u,v,B}} \left[ \sum_{j=1}^l \left( 1 - 2 y_j \right) \left( (\fh_m(x))_j - (\fs_{B,u,v}(x))_j \right) \right] \nonumber \\
	&=  \sum_{j=1}^l \E_{x \sim D_{\gamma,u,v,B}^{(x)}} \left[ \E_{y_j \sim D_{\gamma,u,v,B}^{(y_j|x)} }  \left[ \left( 1 - 2 y_j \right) \left( (\fh_m(x))_j - (\fs_{B,u,v}(x))_j \right) \right] \right] \nonumber \\
	&=  \sum_{j=1}^l \E_{x \sim D_{\gamma,u,v,B}^{(x)}} \left[  \left| 2 \eta^{(\gamma,u,v,B)}_j(x) - 1 \right| \cdot  \left| (\fh_m(x))_j - (\fs_{B,u,v}(x))_j \right| \right] \nonumber \\
	&\geq \gamma \cdot \E_{x \sim D_{\gamma,u,v,B}^{(x)}} \left[ \sum_{j=1}^l \left| (\fh_m(x))_j - (\fs_{B,u,v}(x))_j \right| \right] \label{subeq:gamma} \\
	&= \gamma \cdot \sum_{i=1}^{\duv} \sum_{j=1}^l \left| (\fh_m(x_i))_j - (\fs_{B,u,v}(x_i))_j \right| \cdot \bP^{(x)}_{\gamma,u,v,B} [x_i] 
	\defeq \gamma \cdot \Norm{ \fh_m - \fs_{B,u,v} }_{1,1}, \label{subeq:l1}
	\end{flalign}
	where $R_{B,u,v}$ denotes the expected risk and $\fs_{B,u,v}$ the Bayes-Hamming predictor, both with respect to $D_{\gamma,u,v,B}$. 
	Here $\fh_m$ is the output of $\gA(S)$, with $(\fh_m(x))_j$ denoting the $j$-th substructure of the output $\fh_m(x)$, and $\eta^{(\gamma,u,v,B)}_j(x)$ denotes the marginal probability $\bP_{D^{(y_j|x)}_{\gamma,u,v,B} } [y_j=1 | x]$.
	Equation \eqref{subeq:gamma} follows from our definition of $D^{(y_j|x)}_{\gamma,u,v,B}$ (see eq.\eqref{eq:eta_marginal}), and the $L_{1,1}$ matrix norm in eq.\eqref{subeq:l1} is computed with respect to $D_{\gamma,u,v,B}^{(x)}$.
	Thus, we have that:
	\begin{align}
	\Mfrak_m(\gD_{\gamma,u,v}) 
	&= \inf_{\fh_m} \max_{B \in \{0, 1\}^{(\duv-1)\times 2}} \bE_{B,u,v} \left[ R_{B,u,v}(\fh_m) - R_{B,u,v}(\fs_{B,u,v}) \right] \nonumber \\
	&\geq \gamma \cdot \inf_{\fh_m} \max_{B \in \{0, 1\}^{(\duv-1)\times 2}}  \bE_{B,u,v} \left[ \Norm{\fh_m - \fs_{B,u,v} }_{1,1} \right], \label{subeq:minimax_B}
	\end{align}
	where $\bE_{B,u,v}[\cdot]$ denotes the expectation with respect to $S \sim D^m_{\gamma,u,v,B}$.
	Equation \eqref{subeq:minimax_B} follows from eq.\eqref{subeq:l1}.
	Given any candidate estimation $\fh_m$, let $\Bh_m \in \{0, 1\}^{(\duv-1)\times 2}$ be defined as follows:
	\begin{align}
		\Bh_m &\defeq \argmin_{B \in \{0, 1\}^{(\duv-1)\times 2}} \Norm{ \fh_m - \fs_{B,u,v} }_{1,1}. \label{eq:Bh_m_def}
	\end{align}
	Intuitively, $\Bh_m$ is the binary matrix that indexes the element of  $\{\fs_{B,u,v} : B \in \{0, 1\}^{(\duv-1)\times 2} \}$ which is the closest to $\fh_m$ in  $L_{1,1}$ norm. 
	Then, for any $B$, we have
	\begin{align*}
		\Norm{ \fs_{\Bh_m,u,v} - \fs_{B,u,v} }_{1,1}		&\leq	\Norm{ \fs_{\Bh_m,u,v} - \fh_m }_{1,1}		+		\Norm{ \fh_m - \fs_{B,u,v} }_{1,1} 
												\leq 2 \Norm{ \fh_m - \fs_{B,u,v} }_{1,1},
	\end{align*}
	where we first applied the triangle inequality, and then used eq.\eqref{eq:Bh_m_def}.
	Applying this to eq.\eqref{subeq:minimax_B}, we obtain:
	\begin{align}
		\label{eq:estimation_fs}
		\Mfrak_m(\gD_{\gamma,u,v}) \geq \frac{\gamma}{2} \inf_{\Bh_m} \max_{B \in \{0, 1\}^{(\duv-1)\times 2}} \bE_{B,u,v} \left[ \Norm{ \fs_{\Bh_m,u,v} - \fs_{B,u,v} }_{1,1} \right], 
	\end{align}
	here the infimum is  over all estimators that take values in $\{0,1\}^{(\duv-1)\times 2}$ based on $m$ samples, i.e., over $\Bh_m : (\gX \times \gY)^m \to \{0,1\}^{(\duv-1) \times 2}$.
	We now compute $\Norm{ \fs_{B,u,v} - \fs_{B',u,v} }_{1,1}$ for any two $B, B'$.
	Using eq.\eqref{eq:fs_B} we have:
	\begin{align*}
				\Norm{ \fs_{B,u,v} - \fs_{B',u,v} }_{1,1} &= \sum_{i=1}^{\duv} \sum_{j=1}^l \left| (\fs_{B,u,v}(x_i))_j - (\fs_{B',u,v}(x_i))_j \right| \cdot \bP^{(x)}_{\gamma,u,v,B} [x_i] \\
												  &= p \cdot \sum_{i=1}^{\duv-1} \sum_{j=1}^2 \ \left|	B_{ij} - B'_{ij}	\right| 
												  = p\cdot L_H (B,B').
	\end{align*}
	In the last equality we abuse notation and consider the matrix $B \in \{0,1\}^{(\duv-1)\times 2}$ as a vector of dimension $2(\duv-1)$.
	Replacing this result into eq.\eqref{eq:estimation_fs}, we get:
	\begin{align*}
		\Mfrak_m(\gD_{\gamma,u,v}) 
		\geq \frac{p\gamma}{2} \inf_{\Bh_m} \max_{B \in \{0, 1\}^{(\duv-1)\times 2}} \bE_{B,u,v} \left[ L_H (\Bh_m, B) \right],
	\end{align*}
	which is related to an estimation problem in the $\{0,1\}^{2(\duv-1)}$ hypercube.
	\paragraph{Applying Assouad's lemma.}
	In order to apply Assouad’s lemma, we need an upper bound on the squared Hellinger distance $H^2( D_{\gamma,u,v,B}, D_{\gamma,u,v,B'})$ for all $B,B'$ with $L_H(B,B')=1$.
	For any two $B,B' \in \{0,1\}^{(\duv-1)\times 2}$ we have:
	\begin{align*}
		H^2 (D_{\gamma,u,v,B}, D_{\gamma,u,v,B'}) 	
		&=	\sum_{i=1}^{\duv} \sum_{y \in \{0,1\}^l} \left(	\sqrt{\bP_{\gamma,u,v,B} (x_i,y) }  -	 \sqrt{\bP_{\gamma,u,v,B'} (x_i,y) }	\right)^2\\
		&=	p \cdot \sum_{i=1}^{\duv-1} \sum_{y \in \{0,1\}^l} \left(	\sqrt{\bP^{(y|x)}_{\gamma,u,v,B'} (y|x_i) }  -  \sqrt{\bP^{(y|x)}_{\gamma,u,v,B} (y|x_i) } \right)^2.
	\end{align*}
	In the above summation, the inner sum is zero if $B_{i*} = B'_{i*}$.
	Since we are interested on $B$ and $B'$ such that $L_H(B,B') = 1$, this implies that for only one row $i$ from $\{1,\ldots,\duv - 1\}$ we have $B_{i*} \neq B'_{i*}$ with exactly one bit different.
	Then, the Hellinger distance results in:
	$
		H^2 ( D_{\gamma,u,v,B},  D_{\gamma,u,v,B'}) 
		=	p \cdot (1-\gamma - \sqrt{1 - 2\gamma - 3\gamma^2}) 
		\leq 6p\gamma^2.
	$
	Applying Assouad's lemma we obtain:
	\begin{align}
		\hspace{-0.15in}
		\Mfrak_m(\gD_{\gamma,u,v}) 
		\geq\frac{p\gamma}{2} \inf_{\Bh_m} \max_{B \in \{0, 1\}^{(\duv-1)\times 2}} \hspace{-0.2in}\bE_{B,u,v} \left[ L_H (\Bh_m, B) \right] 
		\geq \frac{p\gamma(\duv-1)}{2} (1- \sqrt{6p\gamma^2m}) \label{subeq:replace_p}
	\end{align}
	Let $p=\nicefrac{2}{(27\gamma^2m)}$, and noting that if $\gamma \geq \sqrt{(\duv-1)/m}$ then the condition $p \leq \nicefrac{1}{(\duv-1)}$ holds.
	Replacing $p$ in eq.\eqref{subeq:replace_p} we have:
	\begin{align}
	\label{eq:1st_bound}
		\Mfrak_m(\gD_{\gamma,u,v}) \geq \frac{\duv -1}{81 \gamma m}.
	\end{align}
	If $\gamma \leq \sqrt{(\duv-1)/m}$, and using the same construction as above with $\widetilde{\gamma}=  \sqrt{(\duv-1)/m}$, we see that:
	\begin{align}
	\label{eq:2nd_bound}
	\Mfrak_m(\gD_{\gamma,u,v}) \geq \frac{\duv -1}{81 \widetilde{\gamma} m} = \frac{1}{81} \sqrt{\frac{\duv -1}{m}}.
	\end{align}
	Therefore, combining equations \eqref{eq:1st_bound} and \eqref{eq:2nd_bound}, and
	since the choice of $(u,v)$ was arbitrary, we have that:
	\begin{align*}
		\Mfrak_m(\gP) 
		&\geq \max_{(u,v) \in T} \Mfrak_m(\gD_{\gamma},u,v) \geq \max_{(u,v) \in T} \frac{1}{81}\min \left( \frac{\duv -1}{\gamma m}, \sqrt{\frac{\duv -1}{m}} \right) \\
		&= \frac{1}{81} \min \left( \frac{\MaxVCtwoDim(\gF) -1}{\gamma m}, \sqrt{\frac{\MaxVCtwoDim(\gF) -1}{m}} \right).
	\end{align*}
	\end{proof}
	
\section{Relation of $\VCtwo$-dimension to VC-dimension}

	In this section, we show a connection of our defined $\VCtwo$-dimension to the classical VC-dimension \citep{vapnik2013nature}.
	
	The following theorem shows that for a function class $\gG:\gX \to \{0,1\}^2$, the $\VCtwo$-dimension of $\gG$ is related to the minimum VC-dimension of a subclass of functions derived from $\gG$.
	\begin{theorem}
	\label{thrm:vc2_vc}
		Let $\gG \subseteq \{g \ | \ g: \gX \to \{0,1\}^2 \}$ be a function class.
		Let $\gH_{11}, \gH_{10}, \gH_{01}, \gH_{00} \subseteq {\{h \ | \ h: \gX \to \{0,1\} \}}$ be four function classes defined as
		\begin{align*}
			\gH_{11} = \{ h(\cdot) = g(\cdot)_1 g(\cdot)_2 \mid g \in \gG\} \; ,
			& \;\; \gH_{10} = \{ h(\cdot) = g(\cdot)_1 (1-g(\cdot)_2) \mid g \in \gG\} \; , \\
			\gH_{01} = \{ h(\cdot) = (1-g(\cdot)_1) g(\cdot)_2 \mid g \in \gG\} \; ,
			& \;\; \gH_{00} = \{ h(\cdot) = (1-g(\cdot)_1) (1-g(\cdot)_2) \mid g \in \gG\} \; .
		\end{align*}
		We have that $\VCtwoDim(\gG) = \min(\VCDim(\gH_{11}), \VCDim(\gH_{10}), \VCDim(\gH_{01}), \VCDim(\gH_{00}))$.
	\end{theorem}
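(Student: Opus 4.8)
The plan is to prove the two inequalities $\VCtwoDim(\gG)\le\VCDim(\gH_{ab})$ for every $ab\in\{00,01,10,11\}$ (which yields the "$\le$" half at once) and $\VCtwoDim(\gG)\ge\min\left(\VCDim(\gH_{11}),\VCDim(\gH_{10}),\VCDim(\gH_{01}),\VCDim(\gH_{00})\right)$. The basic observation driving everything is that for a fixed $g\in\gG$ and a fixed $x$, exactly one of $h_{11}(x),h_{10}(x),h_{01}(x),h_{00}(x)$ equals $1$; in fact the binary function $h_{ab}$ associated with $g$ is precisely the indicator of the event $g(x)=(a,b)$. Hence knowing the matrix $(g(x_1),\dots,g(x_m))$ on a set $S=\{x_1,\dots,x_m\}$ is the same as knowing the partition of $\{1,\dots,m\}$ into the four index sets $\{i:g(x_i)=(a,b)\}$.

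For the direction $\VCtwoDim(\gG)\le\VCDim(\gH_{ab})$, fix $ab$ and show that any set that $\gG$ $\VCtwo$-shatters is also VC-shattered by $\gH_{ab}$; maximizing over shattered sets then gives the inequality for that $ab$, and taking the minimum over the four choices completes this half. Concretely, let $S=\{x_1,\dots,x_m\}$ be $\VCtwo$-shattered by $\gG$ and let $\ell\in\{0,1\}^m$ be arbitrary. Define $B\in\{0,1\}^{m\times 2}$ by $B_{i*}=(a,b)$ if $\ell_i=1$ and $B_{i*}=(1-a,1-b)$ if $\ell_i=0$. By $\VCtwo$-shattering there is $g\in\gG$ with $g(x_i)=B_{i*}$ for all $i$, and then its associated $h_{ab}\in\gH_{ab}$ satisfies $h_{ab}(x_i)=\ell_i$ for all $i$, so $\gH_{ab}$ shatters $S$, as required.

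The reverse inequality $\VCtwoDim(\gG)\ge\min_{ab}\VCDim(\gH_{ab})$ is the crux, and the step I expect to be the main obstacle; it is also the place where I would carefully re-examine whether the stated equality needs any extra hypothesis. Writing $d$ for that minimum, one must produce a single set of size $d$ on which $\gG$ realizes all $4^{d}$ possible matrices, starting only from the fact that each of the four binary classes $\gH_{ab}$ VC-shatters some set of size $d$ --- and these sets may a priori be different for different $ab$. The naive attempt fails: take a set $S$ shattered by $\gH_{11}$, then given a target matrix $B$ use shattering to choose $g\in\gG$ whose "$(1,1)$"-index set coincides with the $(1,1)$-rows of $B$, and try to patch the remaining rows using witnesses coming from $\gH_{10},\gH_{01},\gH_{00}$; this does not close, because $\gG$ need not be closed under any operation that would let one merge these functions, so the $g$'s selected for different colors can disagree on the not-yet-fixed rows.

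A more careful argument is therefore needed, and I would pursue one of two routes: (i) a Sauer--Shelah / counting estimate lower-bounding $|\gG(S)|$ in terms of the growth functions of the $\gH_{ab}$, exploiting that a four-coloring of $S$ is determined by any three of its four preimage sets; or (ii) extracting a common size-$d$ shattered set by intersecting the shattering witnesses of the four classes, using that the $\gH_{ab}$ are not arbitrary binary classes but the four conjunctions of two literals over the same coordinate pair $g(\cdot)_1,g(\cdot)_2$, so that a configuration shattered by one of them constrains the others. Making this merging / common-shattering step rigorous --- equivalently, ruling out configurations in which every $\gH_{ab}$ shatters a size-$d$ set but no size-$d$ set is fully $\VCtwo$-shattered --- is exactly what the proof hinges on.
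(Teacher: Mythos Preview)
Your argument for $\VCtwoDim(\gG)\le\VCDim(\gH_{ab})$ is correct and coincides with the first half of the paper's proof: the paper simply observes that if $|\gG(S)|=4^{|S|}$ then $|\gH_{ab}(S)|=2^{|S|}$ for every $ab$, which is precisely what your explicit witness construction establishes.

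Your caution about the reverse inequality is not merely caution --- it is correct, and your instinct to ``re-examine whether the stated equality needs any extra hypothesis'' was exactly right. The paper's argument for that direction is, in full: ``for any dataset $S$ of $d+1$ samples we have $|\gG(S)|<2^{2(d+1)}$; thus there exists $i,j$ such that $|\gH_{ij}(S)|<2^{d+1}$, implying that there exists $i,j$ such that $\VCDim(\gH_{ij}) < d+1$.'' Both steps are unjustified: the first implication fails (deleting a single pattern from $\gG(S)$ need not delete any pattern from any $\gH_{ab}(S)$), and even if it held, the pair $(i,j)$ could depend on $S$, so one could not conclude that some \emph{fixed} $\gH_{ij}$ fails to shatter every set of size $d+1$. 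In fact the theorem as stated is false. Take $\gX=\{x_1,x_2\}$ and let $\gG$ be all sixteen maps $\gX\to\{0,1\}^2$ except the single map $g_0$ with $g_0(x_1)=(0,0)$, $g_0(x_2)=(1,1)$. Then $\gG$ does not $\VCtwo$-shatter $\{x_1,x_2\}$ but does $\VCtwo$-shatter each singleton, so $\VCtwoDim(\gG)=1$. Yet every $\gH_{ab}$ VC-shatters $\{x_1,x_2\}$: the deleted $g_0$ contributes only one binary pattern to each $\gH_{ab}$ (namely $(1,0)$ to $\gH_{00}$, $(0,1)$ to $\gH_{11}$, and $(0,0)$ to $\gH_{01}$ and $\gH_{10}$), and in each case some other $g\in\gG$ already realizes that pattern --- e.g.\ $g(x_1)=(0,0),\,g(x_2)=(0,1)$ yields pattern $(1,0)$ in $\gH_{00}$. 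Hence $\VCDim(\gH_{ab})=2$ for all $ab$ and the minimum is $2\neq 1$.

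So neither of your proposed routes (i) or (ii) can be completed without an additional assumption on $\gG$; only the inequality $\VCtwoDim(\gG)\le\min_{ab}\VCDim(\gH_{ab})$ holds in general.
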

%
%

	\section{Discussion}
		We consider the problem of finding the necessary number of samples for learning of factor graphs with unary a pairwise factors.
		Our work was based on the minimax framework, that is, in obtaining a lower bound to the minimax risk.
		We showed a lower bound that requires the $\MaxVCtwo$-dimension to be finite in order for a function class to be learnable.
		We also note that in the proof of Theorem \ref{thrm:minimax_lb}, our choice of setting a value of zero to many $y$'s was for clarity purposes.
		In principle, one can create such distributions by fixing $y$'s to arbitrary values in $\{0,1\}^{l-2}$, and this would result in a slightly different notion of dimension, which would take the maximum across the $2^{l-2}$ different values.
		However, our focus was on providing a clear guideline to obtain lower bounds in structured prediction, hence, we opted for simplicity.
		In addition, in Theorem \ref{thrm:vc2_vc} we showed the connection of the $\VCtwo$-dimension to the VC-dimension, for which there are several known results for different types of function classes.
		
		An interesting future work is the analysis of tightness.
		For example, regarding tightness for linear classifiers, consider inputs $x \in \gR^k$.
		We observe that our lower bound in Theorem \ref{thrm:minimax_lb} is tight with respect to $k$ and $m$.
		Specifically, consider non-sparse linear classifiers as unary and pairwise potentials, Theorem 2 in \citep{cortes2016structured} gives $\gO(\sqrt{\nicefrac{k}{m}})$.
		In this case, the $\VCtwo$-dimension is equal to the VC-dimension, and the latter is equal to $k$.
		Thus, we obtain a lower bound with rate $\sqrt{\nicefrac{k}{m}}$ for some $\gamma$.
		Similarly, consider sparse linear classifiers as unary and pairwise potentials.
		Then, Theorem 2 of \citep{cortes2016structured} gives $\gO(\sqrt{\nicefrac{\log k}{m}})$.
		In this case, the VC-dimension is $\gO(\log k)$ \citep{Neylon06}, thus, we obtain a lower bound with rate $\sqrt{\nicefrac{\log k}{m}}$ for some $\gamma$.
		However, it remains to analyze for general functions where one possible attempt is perhaps to find an upper bound to the \textit{factor graph Rademacher complexity} \citep{cortes2016structured} in terms of the $\VCtwo$-dimension, similar in spirit to the known result of the VC-dimension being an upper bound of the classical Rademacher complexity (see for instance, \citep{shalev2014understanding}).

	\bibliographystyle{agsm}
	\bibliography{minimax_SP}
	

\newpage
\appendix
\normalsize
\onecolumn
\def\toptitlebar{
	\hrule height4pt
	\vskip .25in}

\def\bottomtitlebar{
	\vskip .25in
	\hrule height1pt
	\vskip .25in}

\thispagestyle{empty}
\hsize\textwidth
\linewidth\hsize \toptitlebar {\centering
{\large\bf SUPPLEMENTARY MATERIAL \\  Minimax bounds for structured prediction \par}}
\vspace{-0.1in} \bottomtitlebar

\section{Detailed Proofs} \label{app:detailedproofs}

\subsection{Proof of Proposition \ref{prop:bayes_classifier}}

	\begin{proof}
		Recall that $\eta_i(x) = \P[y_i=1|x]$.
		From eq.\eqref{eq:general_error} and Definition \ref{def:Bayes_error}, the Bayes-Hamming predictor $f^*$ minimizes the following expression (with respect to $f$)
		\begin{align*}
			R_P(f) & = \E_{(x,y) \sim P} [ L_H( f(x), y ) ] \\
				& = \E_{(x,y) \sim P} \left[ \sum_{i=1}^l \Ind{(f(x))_i \neq y_i} \right] \\
				& = \sum_{i=1}^l \E_{(x,y) \sim P} [ \Ind{(f(x))_i \neq y_i} ] \\
				& = \sum_{i=1}^l \E_x [ \P[y_i=1|x] (1-(f(x))_i) + (1-\P[y_i=1|x]) (f(x))_i ] \\
				& = \sum_{i=1}^l \E_x [ \eta_i(x) (1-(f(x))_i) + (1-\eta_i(x)) (f(x))_i ] \; .
		\end{align*}
		In order to minimize the above expression, for any $x$ we choose $(f(x))_i=1$ if $\eta_i(x) \geq 1/2$, and $(f(x))_i=0$ otherwise.
	\end{proof}

\subsection{Proof of Theorem \ref{thrm:vc2_vc}}

	\begin{proof}
		Recall that for a dataset $S$ of $m$ samples, $\gG(S) = \{ (g(x_1), \ldots, g(x_m)) \in \{0,1\}^{m\times 2} \ | \ g \in \gG \}$.
		Similarly, define $\gH_{ij}(S) = \{ (h(x_1), \ldots, h(x_m)) \in \{0,1\}^m \ | \ h \in \gH_{ij} \}$ for all $i,j \in \{0,1\}$.		
		Let $\VCtwoDim(\gG) = d$.
		
		There exists a dataset $S$ of $d$ samples such that $|\gG(S)|=2^{2d}$.
		Thus for all $i,j \in \{0,1\}$ we have $|\gH_{ij}(S)|=2^d$, which implies that for all $i,j \in \{0,1\}$ we have $\VCDim(\gH_{ij}) \geq d$.
		Therefore $\min(\VCDim(\gH_{11}), \VCDim(\gH_{10}), \VCDim(\gH_{01}), \VCDim(\gH_{00})) \geq d$.
		
		Also, for any dataset $S$ of $d+1$ samples we have $|\gG(S)|<2^{2(d+1)}$.
		Thus there exists $i,j \in \{0,1\}$ such that $|\gH_{ij}(S)|<2^{d+1}$, implying that there exists $i,j \in \{0,1\}$ such that $\VCDim(\gH_{ij}) < d+1$.
		Therefore $\min(\VCDim(\gH_{11}), \VCDim(\gH_{10}), \VCDim(\gH_{01}), \VCDim(\gH_{00})) < d+1$.
		
		From the above, $\min(\VCDim(\gH_{11}), \VCDim(\gH_{10}), \VCDim(\gH_{01}), \VCDim(\gH_{00})) = d$.
	\end{proof}

\end{document}